\theoremstyle{plain}
\newtheorem{thm}{\protect\theoremname}
\theoremstyle{plain}
\newtheorem*{thm*}{\protect\theoremname}
\title{Understanding training and generalization in deep learning by Fourier analysis}
\author{
  Zhi-Qin John Xu\thanks{This work is done while Xu is a visiting member at Courant Institute of Mathematical Sciences, New York University, New York, United States.} \\
  New York University Abu Dhabi\\
  Abu Dhabi 129188, United Arab Emirates \\
  \texttt{zhiqinxu@nyu.edu} \\
}
\providecommand{\theoremname}{Theorem}
\begin{document}

\maketitle 
\begin{abstract}
\textbf{Background}: It is still an open research area to theoretically
understand why Deep Neural Networks (DNNs)---equipped with many more
parameters than training data and trained by (stochastic) gradient-based
methods---often achieve remarkably low generalization error. \textbf{Contribution}:
We study DNN training by Fourier analysis. Our theoretical framework
explains: i) DNN with (stochastic) gradient-based methods often endows
low-frequency components of the target function with a higher priority
during the training; ii) Small initialization leads to good generalization
ability of DNN while preserving the DNN's ability to fit any function.
These results are further confirmed by experiments of DNNs fitting
the following datasets, that is, natural images, one-dimensional functions
and MNIST dataset.
\end{abstract}

\section{Introduction\label{sec:Introduction}}

\paragraph*{Background}

Deep learning has achieved great success as in many fields (\cite{lecun2015deep}).
Recent studies have focused on understanding why DNNs, trained by
(stochastic) gradient-based methods, can generalize well, that is,
DNNs often fit the test data well which are not used for training
in practice. Counter-intuitively, although DNNs have many more parameters
than training data, they can rarely overfit the training data in practice. 

Several studies have focused on the local property (sharpness/flatness)
of loss function at minima (\cite{hochreiter1995simplifying}) to
explore the DNN's generalization ability. \cite{keskar2016large}
empirically demonstrated that with small batch in each training step,
DNNs consistently converge to flat minima, and lead to a better generalization.
However, \cite{dinh2017sharp} argued that most notions of flatness
are problematic. To this end, \cite{dinh2017sharp} used deep networks
with rectifier linear units (ReLUs) to theoretically show that any
minimum can be arbitrarily sharp or flat without specifying parameterization.
With the constraint of small weights in parameterization, \cite{wu2017towards}
proved that for two-layer ReLU networks, low-complexity solutions
lie in the areas with small Hessian, that is, flat and large basins
of attractor (\cite{wu2017towards}). They then concluded that a random
initialization tends to produce starting parameters located in the
basin of flat minima with a high probability, using gradient-based
methods. 

Several studies rely on the concept of stochastic approximation or
uniform stability (\cite{bousquet2002stability,hardt2015train}).
To ensure the stability of a training algorithm, \cite{hardt2015train}
assumed loss function with good properties, such as Lipschitz or smooth
conditions. However, the loss function of a DNN is often very complicated
(\cite{zhang2016understanding}). 

Another approach to understanding the DNN's generalization ability
is to find general principles during the DNN's training. Empirically,
\cite{arpit2017closer} suggested that DNNs may learn simple patterns
first in real data, before memorizing. \cite{xu_training_2018} found
a similar phenomenon empirically, which is referred to \emph{Frequency
Principle (F-Principle)}, that is, for a low-frequency dominant function,
DNNs with common settings first quickly capture the dominant low-frequency
components while keeping the amplitudes of high-frequency components
small, and then relatively slowly captures those high-frequency components.
F-Principle can explain how the training can lead DNNs to a good generalization
empirically by showing that the DNN prefers to fit the target function
by a low-complexity function \cite{xu_training_2018}. \cite{rahaman2018spectral}
found a similar result that “Lower Frequencies are Learned First”.
To understand this phenomenon, \cite{rahaman2018spectral} estimated
an inequality that the amplitude of each frequency component of the
DNN output is controlled by the spectral norm of DNN weights theoretically.
\cite{rahaman2018spectral} then show that the spectral norm\footnote{In \cite{rahaman2018spectral}, ``for matrix-valued weights, their
spectral norm was computed by evaluating the eigenvalue of the eigenvector
obtained with 10 power iterations. For vector-valued weights, we simply
use the $L_{2}$ norm''.} increases gradually during training with a small-size DNN empirically.
Therefore, the inequality implies that ``longer training allows the
network to represent more complex functions by allowing it to also
fit higher frequencies''. However, for a large-size DNN, the spectral
norm almost does not change during the training (See an example in
Fig.\ref{fig:SpectralNorm} in Appendix.), that is, the bound of the
amplitude of each frequency component of the DNN output almost does
not change during the training. Then, the inequality in \cite{rahaman2018spectral}
cannot explain why the F-Principle still holds for a large-size DNN.

\paragraph*{Contribution}

In this work, we develop a theoretical framework by Fourier analysis
aiming to understand the training process and the generalization ability
of DNNs with sufficient neurons and hidden layers. We show that for
any parameter, the gradient descent magnitude in each frequency component
of the loss function is proportional to the product of two factors:
one is a decay term with respect to (w.r.t.) frequency; the other
is the amplitude of the difference between the DNN output and the
target function. This theoretical framework shows that DNNs trained
by gradient-based methods endow low-frequency components with higher
priority during the training process. Since the power spectrum of
the tanh function exponentially decays w.r.t. frequency, in which
the exponential decay rate is proportional to the inverse of weight.
We then show that small (large) initialization would result in small
(large) amplitude of high-frequency components, thus leading the DNN
output to a low (high) complexity function with good (bad) generalization
ability. Therefore, with small initialization, sufficient large DNNs
can fit any function (\cite{cybenko1989approximation}) while keeping
good generalization. 

We demonstrate that the analysis in this work can be qualitatively
extended to general DNNs. We exemplified our theoretical results through
DNNs fitting natural images, 1-d functions and MNIST dataset (\cite{lecun1998mnist}). 

The paper is established as follows. The common settings of DNNs in
this work are presented in Section \ref{sec:Methods}. The theoretical
framework is given in Section \ref{sec:Theoretical-framework}. We
then study the evolution of the mean magnitude of DNN parameters during
the DNN training empirically in Section \ref{sec:The-magnitude-of}.
The theoretical framework is validated by experiments in Section \ref{sec:Understanding-deep-learning}.
The conclusions and discussions are followed in Section \ref{sec:Discussions}.

\section{Methods\label{sec:Methods}}

The activation function for each neuron is tanh. We use DNNs of multiple
hidden layers with no activation function for the output layer. The
DNN is trained by Adam optimizer (\cite{kingma2014adam}). Parameters
of the Adam optimizer are set to the default values (\cite{kingma2014adam}).
The loss function is the mean squared error of the difference between
the DNN output and the target function in the training set. 

\section{Theoretical framework\emph{\label{sec:Theoretical-framework}}}

In this section, we will develop a theoretical framework in the Fourier
domain to understand the training process of DNN. For illustration,
we first use a DNN with one hidden layer with tanh function $\sigma(x)$
as the activation function: 
\[
\sigma(x)=\tanh(x)=\frac{e^{x}-e^{-x}}{e^{-x}+e^{x}},\quad x\in{\rm \mathbb{R}}.
\]
The Fourier transform of $\sigma(wx+b)$ with $w,b\in{\rm \mathbb{R}}$
is,
\begin{equation}
F[\sigma(wx+b)](k)=\sqrt{\frac{\pi}{2}}\delta(k)+\sqrt{\frac{\pi}{2}}\frac{i}{|w|}\exp\left(-ibk/w\right)\frac{1}{\exp(\pi k/2w)-\exp(-\pi k/2w)},\label{eq:FSigOri}
\end{equation}
 where
\[
F[\sigma(x)](k)=\int_{-\infty}^{\infty}\sigma(x)\exp(-ikx){\rm d}x.
\]
 Consider a DNN with one hidden layer with $N$ nodes, 1-d input $x$
and 1-d output:
\begin{equation}
\Upsilon(x)=\sum_{j=1}^{N}a_{j}\sigma(w_{j}x+b_{j}),\quad a_{j},w_{j},b_{j}\in{\rm \mathbb{R}}.\label{eq: DNNmath}
\end{equation}
Note that we call all $w_{j}$, $a_{j}$ and $b_{j}$ as \emph{parameters},
in which $w_{j}$ and $a_{j}$ are \emph{weights}, and $b_{j}$ is
a \emph{bias} \emph{term}. When $|\pi k/w_{j}|$ is large, without
loss of generality, we assume $\pi k/w_{j}\gg0$ and $w_{j}>0$,
\begin{equation}
F[\Upsilon](k)\approx\sum_{j=1}^{N}a_{j}\left[\sqrt{\frac{\pi}{2}}\frac{i}{w_{j}}\exp\left(-(ib_{j}+\pi/2)k/w_{j}\right)\right].\label{eq:FTW}
\end{equation}
We define the difference between DNN output and the \emph{target function}
$f(x)$ at each $k$ as 
\[
D(k)\triangleq F[\Upsilon](k)-F[f](k).
\]
Write $D(k)$ as 
\begin{equation}
D(k)=A(k)e^{i\theta(k)},\label{eq:DAT}
\end{equation}
where $A(k)$ and $\theta(k)\in[-\pi,\pi]$, indicate the amplitude
and phase of $D(k)$, respectively. The loss at frequency $k$ is
$L(k)=\frac{1}{2}\left|D(k)\right|^{2}$, where $|\cdot|$ denotes
the norm of a complex number. The total loss function is defined as:
\[
L=\sum_{k}L(k).
\]
Note that according to the Parseval's theorem\footnote{Without loss of generality, the constant factor that is related to
the definition of corresponding Fourier Transform is ignored here.}, this loss function in the Fourier domain is equal to the commonly
used loss of mean squared error, that is, 
\begin{equation}
L=\frac{1}{2}\sum_{x}(\Upsilon(x)-f(x))^{2},\label{eq:Loss1}
\end{equation}
 At frequency $k$, the amplitude of the gradient with respect to
each parameter can be obtained (see Appendix for details),
\begin{align}
\frac{\partial L(k)}{\partial a_{j}}=i\left(C_{1}-1\right)\frac{C_{0}}{w_{j}}A(k)\exp\left(-\pi k/2w_{j}\right)\label{eq:DLalpha}
\end{align}
\begin{equation}
\frac{\partial L(k)}{\partial w_{j}}=C_{0}C_{2}\frac{a_{j}}{2w_{j}^{3}}A(k)\exp\left(-\pi k/2w_{j}\right)\label{eq:paritialLA}
\end{equation}
\begin{equation}
\frac{\partial L(k)}{\partial b_{j}}=\left(C_{1}-1\right)C_{0}\frac{a_{j}k}{w_{j}^{2}}A(k)\exp\left(-\pi k/2w_{j}\right),\label{eq:paritialLB}
\end{equation}
 where 
\begin{equation}
C_{0}=\sqrt{\frac{\pi}{2}}e^{i\left[\theta(k)+b_{j}k/w_{j}\right]}
\end{equation}
\begin{equation}
C_{1}=\exp\left(-2i\left(b_{j}k/w_{j}+\theta(k)\right)\right),\label{eq:C1}
\end{equation}
\begin{equation}
C_{2}=\left[C_{1}\left(i(\pi k-2w_{j})-2b_{j}k\right)+\left(-i(\pi k-2w_{j})-2b_{j}k\right)\right],\label{eq:C2}
\end{equation}
 The descent amount at any direction, say, with respect to parameter
$\Theta_{jl}$, is 
\begin{equation}
\frac{\partial L}{\partial\Theta_{jl}}=\sum_{l}\frac{\partial L(k)}{\partial\Theta_{jl}}.\label{eq:GDfreq}
\end{equation}
The absolute contribution from frequency $k$ to this total amount
at $\Theta_{jl}$ is 
\begin{equation}
\left|\frac{\partial L(k)}{\partial\Theta_{jl}}\right|=A(k)\exp\left(-|\pi k/2w_{j}|\right)G_{jl}(\Theta_{j},k),\label{eq:DL2}
\end{equation}
where $\Theta_{j}\triangleq\{w_{j},b_{j},a_{j}\}$, $\Theta_{jl}\in\Theta_{j}$,
$G_{jl}(\Theta_{j},k)$ is a function with respect to $\Theta_{j}$
and $k$, which can be found in one of Eqs. (\ref{eq:DLalpha}, \ref{eq:paritialLA},
\ref{eq:paritialLB}).

When the component at frequency $k$ does not converge yet, $\exp\left(-|\pi k/2w_{j}|\right)$
would dominate $G_{jl}(\Theta_{j},k)$ for a small $w_{j}$. Therefore,
the behavior of Eq. (\ref{eq:DL2}) is dominated by $A(k)\exp\left(-|\pi k/2w_{j}|\right)$.
This dominant term also indicates that weights are much more important
than bias terms, which will be verified by MNIST dataset later. 

To examine the convergence behavior of different frequency components
during the training, we compute the relative difference of the DNN
output and $f(x)$ in the frequency domain at \emph{each recording
step}, that is, 
\begin{equation}
\Delta_{F}(k)=\frac{|F[f](k)-F[\Upsilon](k)|}{|F[f](k)|}.\label{eq:dfreq}
\end{equation}
Through the above analysis framework, we have the following theorems
(The proofs can be found at Appendix.)
\begin{thm}
\label{thm:Priority}Consider a DNN with one hidden layer using tanh
function $\sigma(x)$ as the activation function. For any frequencies
$k_{1}$ and $k_{2}$ such that $k_{2}>k_{1}>0$ and there exist $c_{1},c_{2},$
such that $A(k_{1})>c_{1}>0$, $A(k_{2})<c_{2}<\infty$, we have 
\begin{equation}
\lim_{\delta\rightarrow0}\frac{\mu\left(\left\{ w_{j}:\left|\frac{\partial L(k_{1})}{\partial\Theta_{jl}}\right|>\left|\frac{\partial L(k_{2})}{\partial\Theta_{jl}}\right|\quad\text{for all}\quad j,l\right\} \cap B_{\delta}\right)}{\mu(B_{\delta})}=1,\label{eq:thm1proof-1}
\end{equation}
where $B_{\delta}$ is a ball with radius $\delta$ centered at the
origin and $\mu(\cdot)$ is the Lebesgue measure of a set. 
\end{thm}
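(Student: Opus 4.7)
The plan is to exploit the exponential factor $\exp(-|\pi k/(2w_j)|)$ in Eq.~(\ref{eq:DL2}), which decays in $k$ faster than any polynomial in $k$ or $1/|w_j|$ can grow. Once all $|w_j|$ are forced small by $\Theta\in B_\delta$, this factor makes the gradient contribution at the lower frequency $k_1$ dominate that at the higher frequency $k_2$ uniformly in $j,l$.

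Concretely, I would form the ratio
\[
\frac{|\partial L(k_1)/\partial \Theta_{jl}|}{|\partial L(k_2)/\partial \Theta_{jl}|}
= \frac{A(k_1)}{A(k_2)}\,\exp\!\Big(\frac{\pi(k_2-k_1)}{2|w_j|}\Big)\,\frac{G_{jl}(\Theta_j,k_1)}{G_{jl}(\Theta_j,k_2)}.
\]
The hypothesis gives $A(k_1)/A(k_2) > c_1/c_2 > 0$, and the exponential factor tends to $+\infty$ as $|w_j|\to 0^+$ because $k_2-k_1>0$. Reading off $G_{jl}$ from Eqs.~(\ref{eq:DLalpha})--(\ref{eq:paritialLB}), each is the product of a trigonometric factor built from $C_0,C_1,C_2$ (bounded in modulus by a polynomial in $|k|$ and $|b_j|$) and a rational factor in $a_j,w_j$. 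Hence the ratio $G_{jl}(\Theta_j,k_1)/G_{jl}(\Theta_j,k_2)$ is controlled, away from the zeros of $G_{jl}(\Theta_j,k_2)$, by a polynomial in $|k_1|,|k_2|,1/|w_j|,|b_j|,|a_j|$. Since exponential beats polynomial, for every $\Theta$ outside the degenerate set there exists a threshold $w^*(\Theta)>0$ such that the ratio exceeds $1$ whenever $|w_j|<w^*$; since $\Theta\in B_\delta$ enforces $|w_j|\leq\delta$, a sufficiently small $\delta$ forces the inequality to hold simultaneously for every $j$ and every $l\in\{a,w,b\}$.

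The delicate step is measure-theoretic: I must check that the exceptional set where the ratio is undefined or reversed has Lebesgue measure zero inside $B_\delta$. The zeros of $G_{jl}(\Theta_j,k_2)$ are cut out by equations such as $C_1=1$ (i.e.\ $b_jk_2/w_j+\theta(k_2)\in \pi\mathbb{Z}$) or $a_j=0$; these are countable unions of analytic hypersurfaces in parameter space, hence null. Moreover, the polynomial bound on $G_{jl}(\Theta_j,k_1)/G_{jl}(\Theta_j,k_2)$ can be made uniform on $B_\delta$ by a single polynomial envelope in $1/|w_j|$, so a single $\delta_0$ suffices. Combining these, the complement of the ``good'' set in $B_\delta$ is contained in a null set for all $\delta<\delta_0$, which yields the measure ratio tending to $1$. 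The main obstacle is precisely the bookkeeping of these degenerate loci and the uniformity of the polynomial envelope, since the raw formulas for $G_{jl}$ mix oscillatory and rational terms; aside from this, the estimate follows almost immediately from the exponential--polynomial gap in Eq.~(\ref{eq:DL2}).
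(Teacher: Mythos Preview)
Your overall intuition (the exponential factor in Eq.~(\ref{eq:DL2}) eventually dominates) is the same as the paper's, but the measure-theoretic step has a genuine gap, and in fact you have the degeneracy pointing the wrong way.

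The obstruction is not where $G_{jl}(\Theta_j,k_2)$ vanishes---there the ratio is $+\infty$, which is fine---but where $G_{jl}(\Theta_j,k_1)$ vanishes or is very small. For $\Theta_{jl}=a_j$ this factor is $|1-C_1(k_1)|=\bigl|1-\exp(-2i(b_jk_1/w_j+\theta(k_1)))\bigr|$, which, for fixed $b_j\neq 0$, hits zero at the countably many points $w_j=2b_jk_1/(2n\pi+2\theta(k_1))$ that \emph{accumulate at $w_j=0$}. Near each such zero the numerator is arbitrarily small while the exponential factor is merely a fixed finite number, so the desired inequality fails on a genuine open interval around each zero. Hence the ``bad'' set inside $B_\delta$ is \emph{not} contained in a null set for any $\delta>0$; your claimed polynomial lower envelope for $G_{jl}(\Theta_j,k_1)/G_{jl}(\Theta_j,k_2)$ simply does not exist, because the numerator oscillates down to zero infinitely often as $w_j\to 0$.

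What is actually needed---and what the paper does---is a quantitative comparison: for a threshold $\xi>0$, the set $\{\,w_j:|1-C_1(k_1)|\le\xi\,\}$ has total Lebesgue measure $V_1=O(\xi)$ (summing the lengths of the small intervals around each zero), whereas on its complement the exponential dominates as soon as $|w_j|<G(\xi)$ with $G(\xi)\sim \pi(k_2-k_1)/\bigl(2\ln(1/\xi)\bigr)$. Since $\xi/G(\xi)\to 0$ as $\xi\to 0$, the bad-to-total ratio in $B_{G(\xi)}$ tends to zero, which is exactly the statement of the theorem. Your outline skips this estimate and replaces it with a null-set claim that is false; restoring the $O(\xi)$ versus $1/\ln(1/\xi)$ comparison is the missing idea.
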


Thm \ref{thm:Priority} shows that for any two non-converged frequencies,
almost all sufficiently small weights satisfy that a lower-frequency
component has a higher priority during the gradient-based training. 
\begin{thm}
\label{thm:equalprioritySmall}Consider a DNN with one hidden layer
with tanh function $\sigma(x)$ as the activation function. For any
frequencies $k_{1}$ and $k_{2}$ such that $k_{2}>k_{1}>0$, we consider
non-degenerate situation, that is, $\left|F[f](k_{1})\right|>0$,
and there exist positive constants $C_{a}$, $\xi$, $\xi_{1}$, and
$\xi_{2}$, such that $A(k_{2})<C_{a}$, $\left|1-C_{1}(k_{1})\right|>\xi$,
and $\xi_{2}>|C_{2}(k_{1})|>\xi_{1}$. $\forall\epsilon>0$. Then,
for any $\epsilon>0$, there exists $M>0$, for any $w_{j}\in[-M,M]\backslash\{0\}$
such that there exists $\Theta_{jl}\in\{w_{j},b_{j},a_{j}\}$ satisfying
\begin{equation}
\left|\frac{\partial L(k_{1})}{\partial\Theta_{jl}}\right|=\left|\frac{\partial L(k_{2})}{\partial\Theta_{jl}}\right|,\label{eq:neq0-1}
\end{equation}
we have $\Delta_{F}(k_{1})<\epsilon$.
\end{thm}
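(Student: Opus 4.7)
}

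The plan is to use Eq.~\eqref{eq:DL2} to convert the equality hypothesis into an explicit upper bound on $A(k_1)$, and then to invoke the non-degeneracy of $F[f](k_1)$ to translate this bound into a bound on the relative error $\Delta_F(k_1)$. The payoff of this ordering is that all the ``hard'' work happens at the level of one scalar equation in the Fourier amplitudes.

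First I would substitute Eq.~\eqref{eq:DL2} into the equality $|\partial L(k_1)/\partial\Theta_{jl}| = |\partial L(k_2)/\partial\Theta_{jl}|$ and solve for $A(k_1)$, obtaining
\begin{equation*}
A(k_1) = A(k_2)\,\exp\!\left(-\frac{\pi(k_2-k_1)}{2|w_j|}\right)\,\frac{|G_{jl}(\Theta_j,k_2)|}{|G_{jl}(\Theta_j,k_1)|}.
\end{equation*}
The hypothesis $A(k_2) < C_a$ caps the first factor, and the non-vanishing hypotheses $|1-C_1(k_1)| > \xi$ and $|C_2(k_1)| > \xi_1$ give a lower bound on $|G_{jl}(\Theta_j,k_1)|$ for each of the three formulas \eqref{eq:DLalpha}--\eqref{eq:paritialLB} (whichever parameter $\Theta_{jl}$ is the one for which the equality happens to hold).

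Next I would upper-bound $|G_{jl}(\Theta_j,k_2)|$. Since $|C_1(k_2)| = 1$ one has $|1-C_1(k_2)| \leq 2$, and the expression for $C_2(k_2)$ in Eq.~\eqref{eq:C2} is at most polynomial in $k_2$, $|b_j|$ and $1/|w_j|$. In every case the ratio $|G_{jl}(\Theta_j,k_2)|/|G_{jl}(\Theta_j,k_1)|$ therefore grows at most polynomially in $1/|w_j|$, while the factor $\exp(-\pi(k_2-k_1)/(2|w_j|))$ decays super-polynomially as $|w_j| \to 0$. Hence $A(k_1) \to 0$ as $|w_j| \to 0$, uniformly over any range of $a_j$, $b_j$ compatible with the upper bound $|C_2(k_1)| < \xi_2$.

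Finally, since $\Delta_F(k_1) = A(k_1)/|F[f](k_1)|$ and $|F[f](k_1)| > 0$ by non-degeneracy, $A(k_1) \to 0$ immediately yields $\Delta_F(k_1) \to 0$, so for any prescribed $\epsilon > 0$ we can choose $M > 0$ small enough that $|w_j| \in (0,M]$ guarantees $\Delta_F(k_1) < \epsilon$. The main obstacle is keeping the polynomial-versus-exponential comparison uniform across the three choices of $\Theta_{jl}$, since the three formulas carry different negative powers of $w_j$ and, in the $w_j$-derivative case, an implicit $w_j$-dependence inside $C_2$; this is the step where the hypothesis $\xi_2 > |C_2(k_1)| > \xi_1$ is essential, as it prevents degenerate cancellations in the denominator of the ratio.
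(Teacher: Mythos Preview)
Your proposal is correct and follows essentially the paper's own approach: solve the equality for $A(k_1)$ via Eq.~\eqref{eq:DL2}, bound the ratio $|G_{jl}(\Theta_j,k_2)|/|G_{jl}(\Theta_j,k_1)|$ using the non-degeneracy hypotheses $|1-C_1(k_1)|>\xi$ and $|C_2(k_1)|>\xi_1$, and let the exponential factor $\exp\!\big(-\pi(k_2-k_1)/(2|w_j|)\big)$ drive $\Delta_F(k_1)=A(k_1)/|F[f](k_1)|$ below $\epsilon$; the paper carries this out explicitly only for $\Theta_{jl}=a_j$ (where the $G$-ratio is simply $|1-C_1(k_2)|/|1-C_1(k_1)|\le 2/\xi$), handles $w_j,b_j$ by ``similarly,'' and takes $M=\min(M_1,M_2,M_3)$. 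One simplification worth noting: the powers of $|w_j|$ in $G_{jl}$ are identical at $k_1$ and $k_2$ and hence cancel in the ratio, so the polynomial-in-$1/|w_j|$ growth you flag as the ``main obstacle'' never actually arises.
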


The case that the gradient amplitude of the low-frequency component
equals to that of the high-frequency component indicates that low
frequency does not dominate. Thm \ref{thm:equalprioritySmall} implies
that when the deviation of the high-frequency component ($A(k_{2})$)
is bounded, we can always find small weights such that the relative
error of the low-frequency component stays very small when the high-frequency
component has the similar priority as the low-frequency component.
In another word, when the DNN starts to fit the high-frequency component,
the low-frequency one stays at the converged state.
\begin{thm}
\label{thm:equalpriorityHigh} Consider a DNN with one hidden layer
with tanh function $\sigma(x)$ as the activation function. For any
frequencies $k_{1}$ and $k_{2}$ such that $k_{2}>k_{1}>0$, we consider
non-degenerate situation, that is, $\left|F[f](k_{1})\right|>0$,
and there exist positive constants $\xi$, $\xi_{1}$, and $\xi_{2}$,
such that $\left|1-C_{1}(k_{1})\right|>\xi$ and $\xi_{2}>|C_{2}(k_{1})|>\xi_{1}$,
for any $B>0$, there exists $M>0$, for any $A(k_{2})>M$, such that
there exists $\Theta_{jl}\in\{w_{j},b_{j},a_{j}\}$ satisfying
\begin{equation}
\left|\frac{\partial L(k_{1})}{\partial\Theta_{jl}}\right|=\left|\frac{\partial L(k_{2})}{\partial\Theta_{jl}}\right|\neq0,\label{eq:neq02-1}
\end{equation}
we have $\Delta_{F}(k_{1})>B$.
\end{thm}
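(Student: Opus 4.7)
The plan is to read the equality $\bigl|\partial L(k_1)/\partial \Theta_{jl}\bigr| = \bigl|\partial L(k_2)/\partial \Theta_{jl}\bigr|$ together with Eq.~(\ref{eq:DL2}) as the identity
\begin{equation*}
A(k_1) \;=\; A(k_2)\cdot \frac{G_{jl}(\Theta_j,k_2)}{G_{jl}(\Theta_j,k_1)}\,\exp\bigl(-\pi(k_2-k_1)/(2|w_j|)\bigr) \;=:\; A(k_2)\,\rho_{jl}(\Theta_j),
\end{equation*}
so that a uniform lower bound $\rho_{jl}(\Theta_j)\ge\rho^{*}>0$, valid whenever the non-zero equality holds, turns the hypothesis $A(k_2)>M$ into $A(k_1)>\rho^{*}M$. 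Picking $M=B|F[f](k_1)|/\rho^{*}$ then gives $\Delta_F(k_1)=A(k_1)/|F[f](k_1)|>B$ and closes the argument; the whole proof therefore reduces to producing such a $\rho^{*}$.

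I would first split into the three cases $\Theta_{jl}\in\{a_j,w_j,b_j\}$ by substituting Eqs.~(\ref{eq:DLalpha}), (\ref{eq:paritialLA}), (\ref{eq:paritialLB}), which yields
\begin{equation*}
\rho_{a_j}=\frac{|1-C_1(k_2)|}{|1-C_1(k_1)|}\,e^{-\pi(k_2-k_1)/(2|w_j|)},\quad \rho_{w_j}=\frac{|C_2(k_2)|}{|C_2(k_1)|}\,e^{-\pi(k_2-k_1)/(2|w_j|)},\quad \rho_{b_j}=\tfrac{k_2}{k_1}\,\rho_{a_j}.
\end{equation*}
The hypothesis $|1-C_1(k_1)|>\xi$ controls the denominator of $\rho_{a_j}$ and $\rho_{b_j}$ in exactly the direction needed for a lower bound; similarly $|C_2(k_1)|<\xi_2$ does the job for $\rho_{w_j}$, while the companion bound $|C_2(k_1)|>\xi_1$ guarantees non-vanishing of that same denominator. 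The extra factor $k_2/k_1>1$ in $\rho_{b_j}$ only helps.

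The hard part is obtaining a non-trivial lower bound on the remaining piece $G_{jl}(\Theta_j,k_2)\exp(-\pi(k_2-k_1)/(2|w_j|))$. This factor can a priori be driven to $0$ either by $|w_j|\to 0^{+}$, where the exponential damps, or by a phase alignment making $|1-C_1(k_2)|$ (respectively $|C_2(k_2)|$) small. To rule these out I would exploit the non-zero stipulation by arguing through the $k_1$ side: reading the equality as $A(k_1)G_{jl}(\Theta_j,k_1)e^{-\pi k_1/(2|w_j|)}=A(k_2)G_{jl}(\Theta_j,k_2)e^{-\pi k_2/(2|w_j|)}$, and using that the left-hand factor $G_{jl}(\Theta_j,k_1)e^{-\pi k_1/(2|w_j|)}$ is bounded above (thanks to the hypotheses on $C_1(k_1)$ and $C_2(k_1)$ together with $e^{-\pi k_1/(2|w_j|)}\le 1$), I expect to bootstrap a matching lower bound on $G_{jl}(\Theta_j,k_2)\,e^{-\pi(k_2-k_1)/(2|w_j|)}$ that forces $\rho^{*}$ to depend only on $k_1,k_2,\xi,\xi_1,\xi_2$ and $|F[f](k_1)|$. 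This is the step where the most technical care is required; once $\rho^{*}$ is secured, the conclusion is immediate.
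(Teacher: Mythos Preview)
Your reduction to the identity $A(k_1)=A(k_2)\rho_{jl}(\Theta_j)$ is exactly what the paper does, but you then aim for something strictly stronger than the theorem asks: a \emph{uniform} lower bound $\rho^*$ valid for all $\Theta_j$. This is both unnecessary and, as stated, unattainable. In the theorem the neuron parameters $\Theta_j=(w_j,b_j,a_j)$ are part of the hypothesis (the set $\{w_j,b_j,a_j\}$ from which $\Theta_{jl}$ is drawn is fixed before $M$ is chosen), so $M$ is allowed to depend on $w_j,b_j,a_j$. The paper therefore simply absorbs the factor $\exp\bigl(-\pi(k_2-k_1)/|w_j|\bigr)$, which is a fixed positive number once $w_j$ is fixed, into the threshold $M$; no uniform $\rho^*$ is needed.

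Your proposed bootstrapping to salvage a uniform $\rho^*$ does not work. From
\[
A(k_1)\,G_{jl}(\Theta_j,k_1)\,e^{-\pi k_1/(2|w_j|)} \;=\; A(k_2)\,G_{jl}(\Theta_j,k_2)\,e^{-\pi k_2/(2|w_j|)}
\]
an upper bound on the left-hand factor $G_{jl}(\Theta_j,k_1)\,e^{-\pi k_1/(2|w_j|)}$ only yields an \emph{upper} bound on the right side in terms of $A(k_1)$, i.e.\ an inequality of the form $A(k_2)\cdot(\text{something})\le A(k_1)\cdot(\text{bound})$; it gives no lower control on $\rho_{jl}$. In fact no uniform $\rho^*>0$ can exist under the stated hypotheses alone: as $|w_j|\to 0$ the exponential factor $e^{-\pi(k_2-k_1)/(2|w_j|)}$ tends to $0$, and nothing in the hypotheses prevents $|1-C_1(k_2)|$ or $|C_2(k_2)|$ from being arbitrarily small (the $\neq 0$ clause only excludes the exact zero). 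Drop the uniformity requirement: once $\Theta_j$ is fixed, $\rho_{jl}(\Theta_j)$ is a fixed positive number by the $\neq 0$ clause and the bounds on $|1-C_1(k_1)|$, $|C_2(k_1)|$, and taking $M=B\,|F[f](k_1)|/\rho_{jl}(\Theta_j)$ (then the maximum over $l$) finishes the proof exactly as the paper does.
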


Thm \ref{thm:equalpriorityHigh} implies that for a large $A(k_{2})$,
when the low-frequency component does not converge yet, it already
can be significantly affected by the high-frequency component. A large
$A(k_{2})$ can exist when the target function is high-frequency dominate,
that is, a higher-frequency component has a larger amplitude (See
an example in Appendix \ref{subsec:Fitting-high-frequency-dominant}).

Next, we demonstrate that the analysis of Eq. (\ref{eq:DL2}) can
be qualitatively extended to general DNNs. $A(k)$ in Eq. (\ref{eq:DL2})
comes from the square operation in the loss function, thus, is irrelevant
with DNN structure. $\exp\left(-|\pi k/2w_{j}|\right)$ in Eq. (\ref{eq:DL2})
comes from the exponential decay of the activation function in the
Fourier domain. The analysis of $\exp\left(-|\pi k/2w_{j}|\right)$
is insensitive to the following factors. i) Activation function. The
power spectrum of most activation functions decreases as the frequency
increases; ii) Neuron number. The summation in Eq. (\ref{eq: DNNmath})
does not affect the exponential decay; iii) Multiple hidden layers.
If there are multiple hidden layers, the composition of continuous
activation functions is still a continuous function. The power spectrum
of the continuous function still decays as the frequency increases;
iv) High-dimensional input. We need to consider the vector form of
$k$ in Eq. (\ref{eq:FSigOri}); v) High-dimensional output. The total
loss function is the summation of the loss of each output node. Therefore,
the analysis of $A(k)\exp\left(-|\pi k/2w_{j}|\right)$ of a single
hidden layer qualitatively applies to different activation functions,
neuron numbers, multiple hidden layers, and high-dimensional functions.

\section{The magnitude of DNN parameters during training\label{sec:The-magnitude-of}}

Since the magnitude of DNN parameters is important to the analysis
of the gradients, such as $w_{j}$ in Eq. (\ref{eq:DL2}), we study
the evolution of the magnitude of DNN parameters during training.
Through training DNNs by MNIST dataset, empirically, we show that
for a network with sufficient neurons and layers, the mean magnitude
of the absolute values of DNN parameters only changes slightly during
the training. For example, we train a DNN by MNIST dataset with different
initialization. In Fig.\ref{fig:MNISTnorm}, DNN parameters are initialized
by Gaussian distribution with mean $0$ and standard deviation 0.06,
0.2, 0.6 for (a, b, c), respectively. We compute the mean magnitude
of absolute weights and bias terms. As shown in Fig.\ref{fig:MNISTnorm},
the mean magnitude of the absolute value of DNN parameters only changes
slightly during the training. Thus, empirically, the initialization
almost determines the magnitude of DNN parameters. Note that the magnitude
of DNN parameters can have significant change during training when
the network size is small (We have more discussion in \emph{Discussion}).
\begin{center}
\begin{figure}
\begin{centering}
\includegraphics[scale=0.6]{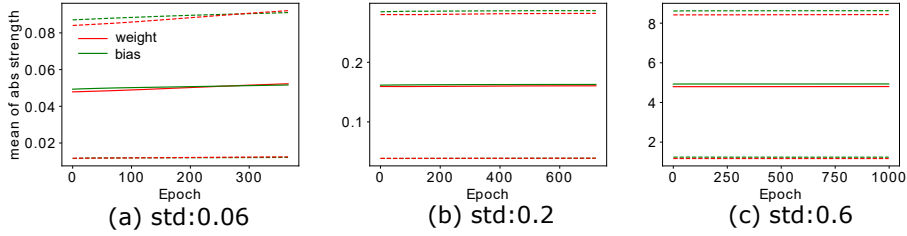}
\par\end{centering}
\caption{Magnitude of DNN parameters during fitting MNIST dataset. DNN parameters
are initialized by Gaussian distribution with mean $0$ and standard
deviation 0.06, 0.2, 0.6 for (a, b, c), respectively. Solids lines
show the mean magnitude of the absolute weights (red) and the absolute
bias (green) at each training epoch. The dashed lines are the mean$\pm$std
for the corresponding color. Note that the green and the red lines
almost overlap. We use a tanh DNN with width: 800-400-200-100. The
learning rate is $10^{-5}$ with batch size $400$. \label{fig:MNISTnorm}}
 
\end{figure}
\par\end{center}

\section{Experiments\label{sec:Understanding-deep-learning}}

Here, we consider only DNNs with sufficient neurons and layers. Since
initialization is very important to deep learning, we will first discuss
small initialization and then discuss large initialization. 

\subsection{Small initialization\label{subsec:Fitting-low-frequency-dominant} }

To show that the F-Principle holds in real data, we train a DNN to
fit a natural image, as shown in Fig.\ref{fig:LowDominate}a---a
mapping from position $(x,y)$ to gray scale strength, which is subtracted
by its mean and then normalized by the maximal absolute value. As
an illustration of F-Principle, we study the Fourier transform of
the image with respect to $x$ for a fixed $y$ (red dashed line in
Fig.\ref{fig:LowDominate}a, denoted as the \emph{target function}
$f(x)$ in the spatial domain). In a finite interval, the frequency
components of the target function can be quantified by Fourier coefficients
computed from Discrete Fourier Transform (DFT). Note that the frequency
in DFT is discrete. The Fourier coefficient of $f(x)$ for the frequency
component $k$ is denoted by $F[f](k)$ (a complex number in general).
$|F[f](k)|$ is the corresponding amplitude. Fig.\ref{fig:LowDominate}b
displays the first $40$ frequency components of $|F[f](k)|$. As
the relative error shown in Fig.\ref{fig:LowDominate}c, the first
four frequency peaks converge from low to high in order. This is consistent
with Thm \ref{thm:Priority}. In addition, the relative difference
of low-frequency components stays small when the DNN is capturing
high-frequency components. This is consistent with Thm \ref{thm:equalprioritySmall}.
Due to the small initialization, as an example in Fig.\ref{fig:LowDominate}d,
when the DNN is fitting low-frequency components, high-frequency components
stay relatively small. 

Viewing from the snapshots during the training process, we can see
that DNN captures the image from coarse-grained low-frequency components
to detailed high-frequency components (Fig.\ref{fig:LowDominate}e-\ref{fig:LowDominate}g).
With small weights, the DNN output generalizes well (as shown in Fig.\ref{fig:LowDominate}h),
that is, the test error is very close to the training error.

In addition, this theoretical framework can also explain the more
complicated training behavior of DNNs' fitting of high-frequency dominant
functions (Thm \ref{thm:equalpriorityHigh} and an example in Appendix
\ref{subsec:Fitting-high-frequency-dominant}).
\begin{center}
\begin{figure}
\begin{centering}
\includegraphics[scale=0.7]{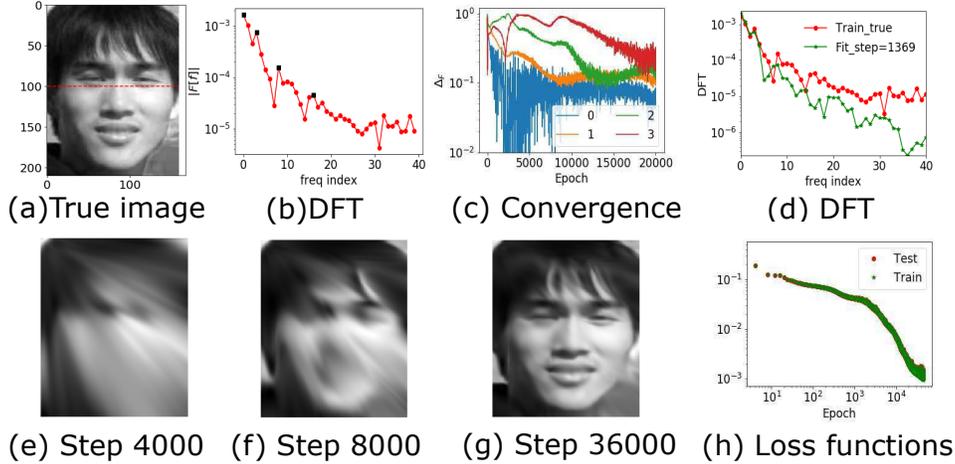}
\par\end{centering}
\caption{Convergence from low to high frequency for a natural image. The training
data are all pixels whose horizontal indexes are odd. (a) True image.
(b) $|F[f]|$ of the red dashed pixels in (a) as a function of frequency
index---Note that for DFT, we can refer to a frequency component
by the \emph{frequency index} instead of its physical frequency---with
selected peaks marked by black dots. (c) $\Delta_{F}$ at different
training epochs for different selected frequency peaks in (b). (d)
$|F[f]|$ (red) and $|F[\Upsilon]|$ (green) at epoch 1369. (e-g)
DNN outputs of all pixels at different training steps. (h) Loss functions.
We use a DNN with width 500-400-300-200-200-100-100. We train the
DNN with the full batch and learning rate $2\times10^{-5}$. We initialize
DNN parameters by Gaussian distribution with mean $0$ and standard
deviation $0.08$. \label{fig:LowDominate}}
 
\end{figure}
\par\end{center}

\subsection{Large initialization}

When the DNN is fitting the natural image in Fig.\ref{fig:LowDominate}a
with small initialization, it generalizes well as shown in Fig.\ref{fig:LowDominate}h.
Here, we show the case of large initialization. Except for a larger
standard deviation of Gaussian distribution for initialization, other
parameters are the same as in Fig.\ref{fig:LowDominate}. After training,
the DNN can well capture the training data, as shown in the left in
Fig.\ref{fig:largeInitSlow}a. However, the DNN output at the test
pixels are very noisy, as shown in the right in Fig.\ref{fig:largeInitSlow}a.
The loss functions of the training data and the test data in Fig.\ref{fig:largeInitSlow}b
also show that the DNN generalizes poorly. To visualize the high-frequency
components of DNN output after training, we study the pixels at the
red dashed lines in Fig.\ref{fig:largeInitSlow}a. As shown in the
left in Fig.\ref{fig:largeInitSlow}c, the DNN accurately fits the
training data. However, for the test data, the DNN output fluctuates
a lot, as shown in Fig.\ref{fig:largeInitSlow}d. Compared with the
case of small initialization, as shown in Fig.\ref{fig:largeInitSlow}e,
the convergence order of the first four frequency peaks are not very
clean. This is consistent with Thm \ref{thm:equalpriorityHigh}.
\begin{center}
\begin{figure}
\begin{centering}
\includegraphics[scale=0.6]{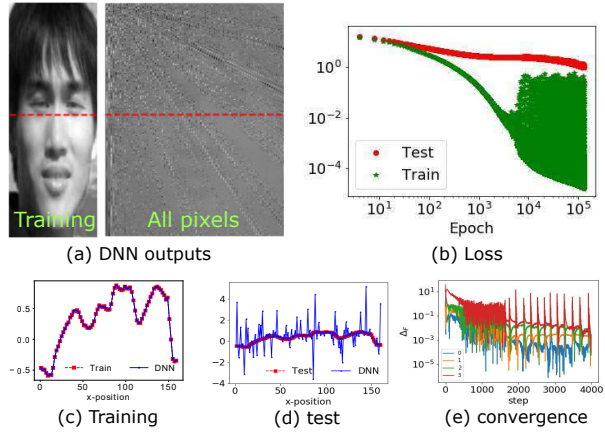}
\par\end{centering}
\caption{Analysis of the training process of DNNs with large initialization
while fitting the image in Fig.\ref{fig:LowDominate}a. The weights
of DNNs are initialized by a Gaussian distribution with mean $0$
and standard deviation $0.5$. (a) The DNN outputs at the training
pixels (left) and all pixels (right). (b) Loss functions. (c) DNN
outputs of training data at the red dashed position in (a). (d) DNN
outputs including test data at the red dashed position in (a). (e)
$\Delta_{F}$ at different training epochs for different selected
frequency peaks in Fig.\ref{fig:LowDominate}b. \label{fig:largeInitSlow}}
 
\end{figure}
\par\end{center}

\subsection{Schematic analysis}

Different initialization can result in very different generalization
ability of DNN's fitting and very different training courses. Here,
we schematically analyze DNN's output after training. With finite
training data points, there is an effective frequency range for this
training set, which is defined as the range in frequency domain bounded
by Nyquist-Shannon sampling theorem (\cite{shannon1949communication})
when the sampling is evenly spaced, or its extensions (\cite{yen1956nonuniform,mishali2009blind})
otherwise. Based on the effective frequency range, we can decompose
the Fourier transform of DNN's output into two parts, that is, effective
frequency range and \emph{extra-higher} frequency range. For different
initialization, since the DNN can well fit the training data, the
frequency components in the effective frequency range are the same. 

Then, we consider the frequency components in the extra-higher frequency
range. The amplitude at each frequency for node $j$ is controlled
by $\exp\left(-|\pi k/2w_{j}|\right)$ with a decay rate $|\pi/2w_{j}|$.
This decay rate is large for small initialization. Since the gradient
descent does not drive these extra-higher frequency components towards
any specific direction, the amplitudes of the DNN output in the extra-higher
frequency range stay small. For large initialization, the decay rate
$|\pi/2w_{j}|$ is relatively small. Then, extra-higher frequency
components of the DNN output could have large amplitudes and much
fluctuate compared with small initialization. 

Higher-frequency function is of more complexity (for example, \cite{wu2017towards}
uses $\mathbb{E}\left\Vert \nabla_{x}f\right\Vert _{2}^{2}$ to characterize
complexity). With small (large) initialization, the DNN's output is
a low-complexity (high-complexity) function after training. When the
training data captures all important frequency components of the target
function, a low-complexity DNN output can generalize much better than
a high-complexity DNN output. 

We next use MNIST dataset to verify the effect of initialization.
We use Gaussian distribution with mean $0$ to initialize DNN parameters.
For simplicity, we use a \emph{two-dimensional vector} $(\cdot,\cdot)$
to denote standard deviations of weights and bias terms, respectively.
Fix the standard deviation for bias terms, we consider the effect
of different standard deviations of weights, that is, $(0.01,0.01)$
in Fig.\ref{fig:MNIST-largeInitSlow}a and $(0.3,0.01)$ in Fig.\ref{fig:MNIST-largeInitSlow}b.
As shown in Fig.\ref{fig:MNIST-largeInitSlow}, in both cases, DNNs
have high accuracy for the training data. However, compared the red
dashed line in Fig.\ref{fig:MNIST-largeInitSlow}a with the yellow
dashed line in Fig.\ref{fig:MNIST-largeInitSlow}b, for the small
standard deviation of weight, the prediction accuracy of the test
data is much higher than that of the large one.

Note that the effect of initialization is governed by weights rather
than bias terms. To verify this, we initialize bias terms with standard
deviation $2.5$. As shown by black curves in Fig.\ref{fig:MNIST-largeInitSlow}a,
the DNN with standard deviation $(0.01,2.5)$ has a bit slower training
course and a bit lower prediction accuracy.
\begin{center}
\begin{figure}
\begin{centering}
\includegraphics{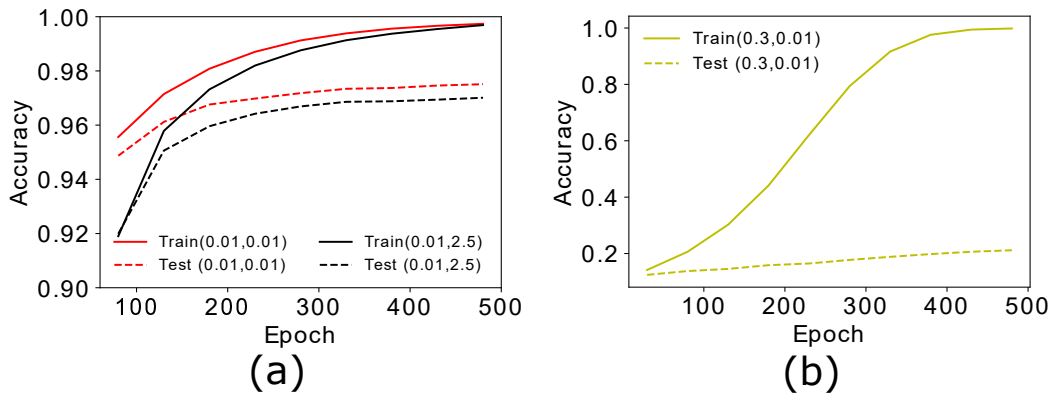}
\par\end{centering}
\caption{Analysis of the training process of DNNs with different initialization
while fitting MNIST dataset. Illustrations are the prediction accuracy
on the training data and the test data at different training epochs.
We use a tanh DNN with width: 800-400-200-100. The learning rate is
$10^{-5}$ with batch size $400$. DNN parameters are initialized
by Gaussian distribution with mean $0$. The legend $(\cdot,\cdot)$
denotes standard deviations of weights and bias terms, respectively.
\label{fig:MNIST-largeInitSlow}}
 
\end{figure}
\par\end{center}

\section{Discussions\label{sec:Discussions}}

In this work, we have theoretically analyzed the DNN training process
with (stochastic) gradient-based methods through Fourier analysis.
Our theoretical framework explains the training process when the DNN
is fitting low-frequency dominant functions or high-frequency dominant
functions (See Appendix \ref{subsec:Fitting-high-frequency-dominant}).
Based on the understanding of the training process, we explain why
DNN with small initialization can achieve a good generalization ability.
Our theoretical result shows that the initialization of weights rather
than bias terms mainly determines the DNN training and generalization.
We exemplify our results through natural images and MNIST dataset.
These analyses are not constrained to low-dimensional functions. Next,
we will discuss the relation of our results with other studies and
some limitations of these analyses. 

\paragraph{Weight norm}

In this work, we analyze the DNN with sufficient neurons and layers
such that the mean magnitude of DNN parameters keeps almost constant
throughout the training. However, if we use a small-scale network,
the mean magnitude of DNN parameters often increases to a stable value.
Empirically, we found that the training increases the magnitude of
DNN parameters when the DNN is fitting high-frequency components,
for which our theory can provide some insight. If the weights are
too small, the decay rate of $\exp\left(-|\pi k/2w_{j}|\right)$ in
Eq. (\ref{eq:FTW}) will be too large. The training will have to increase
the weights fit the high-frequency components of the target function
because the number of neurons is fixed. By imposing regularization
on the norm of weights in a small-scale network, we can prevent the
training from fitting high-frequency components. Since high-frequency
components usually have small power and are easily affected by noise,
without fitting high-frequency components, the norm regularization
will improve the DNN generalization ability. This discussion is consistent
with other studies (\cite{poggio2018theory,nagarajan2017generalization}).
We will address this topic about the evolution of the mean magnitude
of DNN parameters of small-scale networks in our the future work. 

\paragraph{Loss function and activation function}

In this work, we use the mean square error as loss function and tanh
as activation function for the training. Empirically, we found that
by using the mean absolute error as loss function, we can still observe
the convergence order from low to high frequency when the DNN is fitting
low-frequency dominant functions. The term $A(k)$ can be replaced
by other forms that can characterize the difference between DNN outputs
and the target function, by which the analysis of $A(k)$ can be extended.
The key exponential term $\exp\left(-|\pi k/2w_{j}|\right)$ comes
from the property of the activation function. Note that when computing
the Fourier transform of the activation function, we have ignored
the windowing's effect -- which would not change the property of
the activation function in the Fourier domain whose power decays as
the frequency increases. Therefore, for any activation function where
power decreases as the frequency increases and any loss function which
characterizes the difference of DNN outputs and the target function,
the analysis in this work can be qualitatively extended. The exact
mathematical forms of different activation functions and loss functions
can be different. We leave this analysis to our future work.

\paragraph{Sharp/flat minima and generalization}

Since $\exp\left(-|\pi k/2w_{j}|\right)$ exists in all gradient forms
in Eqs. (\ref{eq:DLalpha}, \ref{eq:paritialLA}, \ref{eq:paritialLB}),
$\exp\left(-|\pi k/2w_{j}|\right)$ will also exist in the second-order
derivative of the loss function with respect to any parameter. Here,
we only consider DNN parameters with similar magnitude. With smaller
weights at a minima, the DNN has a good generalization ability along
with that the second-order derivative at the minima is smaller, that
is, a flatter minima. When the weights are very large, the minima
is very sharp. When the DNN is close to a very sharp minima, one training
step can cause the loss function to deviate from the minima significantly
(See a conceptual sketch in Figure 1 of \cite{keskar2016large}).
We also observe that in Fig.\ref{fig:largeInitSlow}, for large initialization,
the loss fluctuates significantly when it is small. Our theoretical
analysis qualitatively shows that a flatter minima is associated with
a better DNN generalization, which resembles the results of other
studies (\cite{keskar2016large,wu2017towards}) (see \emph{Introduction}).

\paragraph{Early stopping }

Our theoretical framework through Fourier analysis can well explain
F-Principle, in which DNN gives low-frequency components with higher
priority as observed in \cite{xu_training_2018}. Thus, our theoretical
framework can provide insight into early stopping. High-frequency
components often have low power (\cite{dong1995statistics}) and are
noisy, but with early stopping, we can prevent DNN from fitting high-frequency
components to achieve a better generalization. 

\paragraph{Noise and real data}

Empirical studies found the qualitative differences in gradient-based
optimization of DNNs on pure noise vs. real data (\cite{zhang2016understanding,arpit2017closer,xu_training_2018}).
We would discuss the mechanism underlying these qualitative differences.

\cite{zhang2016understanding} found that the convergence time of
a DNN increases as the label corruption ratio increases. \cite{arpit2017closer}
concluded that DNNs do not use brute-force memorization to fit real
datasets but extract patterns in the data based on experimental findings
in the dataset of MNIST and CIFAR-10. \cite{arpit2017closer} suggests
that DNNs may learn simple and general patterns first in the real
data. \cite{xu_training_2018} found similar results as the following.
With the simple visualization of low-dimensional functions on Fourier
domain, \cite{xu_training_2018} found F-Principle for low-frequency
dominant functions empirically. However, F-Principle does not apply
to pure noise data (\cite{xu_training_2018}). 

To theoretically understand the above empirically findings, we first
note that the real data in Fourier domain is usually low-frequency
dominant (\cite{dong1995statistics}) while pure noise data often
do not have clear dominant frequency components, for example, white
noise. For low-frequency dominant functions, DNNs can quickly capture
the low-frequency components. However, for pure noise data, since
the high-frequency components are also important, that is, $A(k)$
in Eq. (\ref{eq:DL2}) could be large for a large $k$, the priority
of low-frequency during the training can be relatively small. During
the training, the gradients of low frequency and high frequency can
affect each other significantly. Thus, the training processes for
real data and pure noise data are often very different. Therefore,
along with the analysis in \emph{Results}, F-Principle thus can well
apply to low-frequency dominant functions but not pure noise data
(\cite{arpit2017closer,xu_training_2018}). Since the DNN needs to
capture more large-amplitude higher-frequency components when it is
fitting pure noise data, it often requires longer convergence time
(\cite{zhang2016understanding}). 

\paragraph{Memorization vs. generalization}

Traditional learning theory---which restricts capacity (such as VC
dimension (\cite{vapnik1999overview})) to achieve good generalization---cannot
explain how DNNs can have large capacity to memorize randomly labeled
dataset (\cite{zhang2016understanding}), but still possess good generalization
in real dataset. Another study has shown that sufficient large-scale
DNNs can potentially approximate any function (\cite{cybenko1989approximation}).
In this work, our theoretical framework further resolves this puzzle
by showing that both the DNN structure and the training dataset affect
the effective capacity of DNNs. In the Fourier domain, high-frequency
components can increase the DNN capacity and complexity. With small
initialization, DNNs invoke frequency components from low to high
to fit training data and keep other extra-higher frequency components,
which are beyond the effective frequency range of training data, small.
Thus, DNN's learned capacity and complexity are determined by the
training data, however, they still have the potential of large capacity.
This effect raised from individual dataset is consistent with the
speculation from \cite{kawaguchi2017generalization} that suggests
the generalization ability of deep learning is affected by different
datasets.

\paragraph{Limitations}

i) The theoretical framework in its current form could not analyze
how different DNN structures, such as convolutional neural networks,
affect the detailed training and generalization ability of DNNs. We
believe that to consider the effect of DNN structure, we need to consider
more properties of dataset in addition to that its power decays as
the frequency increases. ii) This qualitative framework cannot analyze
the difference between the depth of layers and the width of layers.
To this end, we need an exact mathematical form for the DFT of the
output of the DNN with multiple hidden layers, which will be left
for our future work. iii) This theoretical framework in its current
form cannot analyze the DNN behavior around local minima/saddle points
during training.

\section*{Acknowledgments}

The author wants to thank Tao Luo for helpful discussion on formalizing
Theorem 1, Wei Dai, Qiu Yang, Shixiao Jiang for critically proofreading
the manuscript. This work was funded by the NYU Abu Dhabi Institute
G1301.

\appendix
\begin{center}
\begin{figure}
\begin{centering}
\includegraphics[scale=0.85]{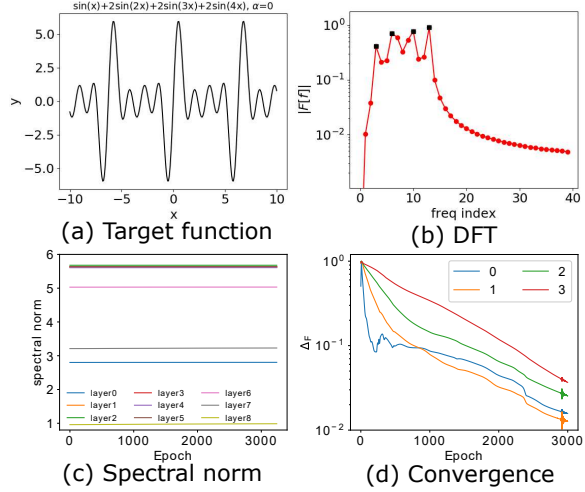}
\par\end{centering}
\caption{Convergence from low frequency to high frequency for a 1-d function
while the spectral norm almost does not change. (a) The target function.
(b) $|F[f]|$ (red solid line) as a function of frequency index with
important peaks marked by black dots. (c) Spectral norm of all weights.
As the same as \cite{rahaman2018spectral}, for matrix-valued weights,
their spectral norm was computed by evaluating the eigenvalue of the
eigenvector. For vector-valued weights, we simply use the $L_{2}$
norm. (d) $\Delta_{F}$ at different recording steps for different
selected frequency peaks in (b). The training data are evenly sampled
in $[-10,10]$ with sample size 120. We use a DNN with width: 800-800-800-800-800-800-500-100.
We train the DNN with the full batch and learning rate $2\times10^{-6}$.
We initialize DNN parameters by Gaussian distribution with mean $0$
and standard deviation $0.1$. \label{fig:SpectralNorm}}
 
\end{figure}
\par\end{center}

\section{Gradient calculation}

By definition, 

\[
\frac{\partial L(k)}{\partial a_{j}}=\overline{D(k)}\frac{\partial D(k)}{\partial a_{j}}+D(k)\frac{\partial\overline{D(k)}}{\partial a_{j}},
\]
where
\[
\frac{\partial D(k)}{\partial a_{j}}=\frac{\partial F[\Upsilon](k)}{\partial a_{j}}=\sqrt{\frac{\pi}{2}}\frac{i}{w_{j}}\exp\left(-(ib_{j}+\pi/2)k/w_{j}\right).
\]
Then
\begin{align*}
\frac{\partial L(k)}{\partial a_{j}} & =A(k)\sqrt{\frac{\pi}{2}}\frac{i}{w_{j}}\left[e^{-i\theta(k)}\exp\left(-(ib_{j}+\pi/2)k/w_{j}\right)-e^{i\theta(k)}\exp\left(-(-ib_{j}+\pi/2)k/w_{j}\right)\right]\\
 & =A(k)\sqrt{\frac{\pi}{2}}\frac{i}{w_{j}}e^{i\theta(k)}\exp\left(-\pi k/2w_{j}\right)\exp\left(ib_{j}k/w_{j}\right)\left[\exp\left(-2i(b_{j}k/w_{j}+\theta(k))\right)-1\right]
\end{align*}
Denote
\[
C_{0}=\sqrt{\frac{\pi}{2}}e^{i\left[\theta(k)+b_{j}k/w_{j}\right]},
\]
\[
C_{1}=\exp\left(-2i\left(b_{j}k/w_{j}+\theta(k)\right)\right).
\]
We have
\[
\frac{\partial L(k)}{\partial a_{j}}=i\left(C_{1}-1\right)\frac{C_{0}}{w_{j}}A(k)\exp\left(-\pi k/2w_{j}\right).
\]
Consider $w_{j}$,

\begin{align*}
\frac{\partial D(k)}{\partial w_{j}} & =\frac{\partial F[\Upsilon](k)}{\partial w_{j}}\\
 & =\sqrt{\frac{\pi}{2}}\frac{a_{j}}{2w_{j}^{3}}\exp\left(-(ib_{j}+\pi/2)k/w_{j}\right)\left(i(\pi k-2w_{j})-2b_{j}k\right),
\end{align*}
\[
\frac{\partial L(k)}{\partial w_{j}}=A(k)C_{0}\frac{a_{j}}{2w_{j}^{3}}\exp\left(-\pi k/2w_{j}\right)\left[C_{1}\left(i(\pi k-2w_{j})-2b_{j}k\right)+\left(-i(\pi k-2w_{j})-2b_{j}k\right)\right].
\]
 Denote
\[
C_{2}=\left[C_{1}\left(i(\pi k-2w_{j})-2b_{j}k\right)+\left(-i(\pi k-2w_{j})-2b_{j}k\right)\right],
\]
we have
\[
\frac{\partial L(k)}{\partial w_{j}}=C_{0}C_{2}\frac{a_{j}}{2w_{j}^{3}}A(k)\exp\left(-\pi k/2w_{j}\right).
\]
Consider $b_{j}$,
\begin{align*}
\frac{\partial D(k)}{\partial b_{j}} & =\frac{\partial F[\Upsilon](k)}{\partial b_{j}}\\
 & =\sqrt{\frac{\pi}{2}}\frac{a_{j}}{w_{j}^{2}}\exp\left(-(ib_{j}+\pi/2)k/w_{j}\right)k
\end{align*}
\[
\frac{\partial L(k)}{\partial b_{j}}=\left(C_{1}-1\right)C_{0}\frac{a_{j}k}{w_{j}^{2}}A(k)\exp\left(-\pi k/2w_{j}\right)
\]

\section{Proof of Theorem \ref{thm:Priority}\label{sec:Proof-of-theorem1}}
\begin{thm*}
\label{thm:Priority-1}Consider a DNN with one hidden layer using
tanh function $\sigma(x)$ as the activation function. For any frequencies
$k_{1}$ and $k_{2}$ such that $k_{2}>k_{1}>0$ and there exist $c_{1},c_{2},$
such that $A(k_{1})>c_{1}>0$, $A(k_{2})<c_{2}<\infty$, we have 
\begin{equation}
\lim_{\delta\rightarrow0}\frac{\mu\left(\left\{ w_{j}:\left|\frac{\partial L(k_{1})}{\partial\Theta_{jl}}\right|>\left|\frac{\partial L(k_{2})}{\partial\Theta_{jl}}\right|\quad\text{for all}\quad j,l\right\} \cap B_{\delta}\right)}{\mu(B_{\delta})}=1,\label{eq:thm1proof}
\end{equation}
where $B_{\delta}$ is a ball with radius $\delta$ centered at the
origin and $\mu(\cdot)$ is the Lebesgue measure of a set. 
\end{thm*}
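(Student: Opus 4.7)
The plan is to exploit the exponential factor $\exp(-|\pi k/2w_j|)$ that appears in the unified representation (\ref{eq:DL2}). Dividing the two gradient magnitudes and isolating the exponential yields
\[
R_{jl}(w_j) := \frac{|\partial L(k_1)/\partial\Theta_{jl}|}{|\partial L(k_2)/\partial\Theta_{jl}|} = \frac{A(k_1)}{A(k_2)}\,\exp\!\left(\frac{\pi(k_2-k_1)}{2|w_j|}\right)\,\frac{G_{jl}(\Theta_j,k_1)}{G_{jl}(\Theta_j,k_2)}.
\]
Since $k_2 > k_1 > 0$, the exponential blows up as $|w_j|\to 0$; the amplitude quotient is bounded below by $c_1/c_2 > 0$ by hypothesis; so the task reduces to controlling the structural quotient $G_{jl}(k_1)/G_{jl}(k_2)$ on a subset of $B_\delta$ whose relative Lebesgue measure tends to $1$.

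The first step is a case analysis of $G_{jl}$ for $\Theta_{jl}\in\{a_j,w_j,b_j\}$ using the explicit formulas for $C_0$, $C_1$, $C_2$. In each case, $G_{jl}(\Theta_j,k)$ is a product of (i) the common modulus $|C_0|=\sqrt{\pi/2}$; (ii) a polynomial factor in $k$ and $1/|w_j|$; and (iii) a bounded oscillating trigonometric term — $|C_1(k)-1| = 2|\sin(b_jk/w_j+\theta(k))|$ for the $a_j$- and $b_j$-gradients, and (after collecting) an amplitude-like expression $|(\pi k - 2w_j)\sin\phi(k) - 2b_jk\cos\phi(k)|$ with $\phi(k) = b_jk/w_j+\theta(k)$ for the $w_j$-gradient. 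Because the polynomial pieces contribute at most finite powers of $1/|w_j|$ that the exponential dominates, the only obstruction is the oscillating denominator becoming anomalously small.

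The second and crucial step is the measure estimate. Fix a threshold $\eta = \eta(\delta)$ that shrinks to zero strictly more slowly than $\exp(-\pi(k_2-k_1)/(4\delta))$, and define the bad set $B_{jl}^{\mathrm{bad}}\subset B_\delta$ on which the oscillating factor in $G_{jl}(\Theta_j,k_2)$ is less than $\eta$. A change of variables $u = b_jk_2/w_j$ converts the infinitely-many oscillations accumulating at the origin into a uniform coverage in $u$; the preimage in $w_j$ of a single $\eta$-neighbourhood of a zero of $\sin$ has measure $O(\eta|w_j|^2/|b_jk_2|)$, and summing over the $O(1/|w_j|)$ oscillations inside $B_\delta$ gives $\mu(B_{jl}^{\mathrm{bad}})/\mu(B_\delta)\to 0$. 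On $B_\delta\setminus B_{jl}^{\mathrm{bad}}$ the exponential factor beats the polynomial $1/\eta$ blow-up, so $R_{jl}(w_j)>1$; a union bound over the finitely many indices $(j,l)$ yields (\ref{eq:thm1proof}).

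The main obstacle is calibrating the threshold $\eta$: it must shrink slowly enough that the polynomial blow-up of $1/G_{jl}(k_2)$ is still dominated by the exponential gain, yet fast enough that the relative measure of the bad set tends to zero. A secondary technicality is the $w_j$-gradient, whose oscillating factor is not literally $\sin$, but can be rewritten as $\sqrt{(\pi k)^2+(2b_jk)^2}\,|\sin(\phi(k)+\psi)|$ for a parameter-dependent phase shift $\psi$, after which the same Sard-type estimate applies. Once these calibrations are in place, the remainder of the argument is routine bookkeeping.
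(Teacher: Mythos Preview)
Your overall strategy matches the paper's proof closely: isolate the exponential factor, define a ``bad set'' where the oscillating trigonometric piece is small, show its relative measure vanishes, and let the exponential dominate on the complement. The paper carries this out for $\Theta_{jl}=a_j$ with a threshold parameter $\xi$ playing the role of your $\eta$, and an interval $(0,G(\xi)]$ playing the role of $B_\delta$; your change-of-variables estimate for the bad set is essentially a repackaging of the paper's direct sum $\sum_n C_3/\beta_n^2$ over the zeros of $\sin$.

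There is, however, a genuine gap: your bad set is defined at the wrong frequency. You exclude the region where the oscillating factor in $G_{jl}(\Theta_j,k_2)$ falls below $\eta$, but when $G_{jl}(k_2)$ is small the ratio $R_{jl}$ is \emph{large}, which helps rather than hurts. The actual obstruction is the \emph{numerator} $G_{jl}(\Theta_j,k_1)$ vanishing. On the complement of your bad set you have secured $G_{jl}(k_2)\ge\eta$, yet nothing prevents $|\sin(b_jk_1/w_j+\theta(k_1))|=0$, in which case $R_{jl}=0$ and the desired inequality fails outright. The paper accordingly defines its exceptional set via $|1-C_1(k_1)|\le\xi$ (the numerator side) and handles the denominator with the trivial bound $|1-C_1(k_2)|\le 2$. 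Swapping $k_2$ for $k_1$ in your second step repairs the argument; the calibration of $\eta(\delta)$ and the rest of your plan then go through essentially as written.
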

\begin{proof}
First, we prove when $\Theta_{jl}=a_{j}$. For a $\xi>0$, we begin
by considering the case when $\left|1-C_{1}(k_{1})\right|\in[0,\xi]$.
Denote $\gamma=2\left(b_{j}k_{1}/w_{j}-\theta(k_{1})\right)$. When
$\xi$ is very close to zero, $\gamma$ is close to $2n\pi$, $n\in\mathbb{\mathbb{Z}}$.
Denote $u\in[0,\pi]$ such that 
\[
|1-e^{iu}|=\xi.
\]
Since $\xi$ is very close to zero, $u$ is also close to zero. Then,
we have 
\[
\xi=u+o(u).
\]
The range of $\gamma$ such that $\left|1-C_{1}(k_{1})\right|\in[0,\xi]$
for each $n$ is $[2n\pi-u,2n\pi+u]$. Consider
\[
2\left(b_{j}k_{1}/w_{j}-\theta(k_{1})\right)=2n\pi+u.
\]
Our theorem considers $w_{j}\rightarrow0$, and when $w_{j}$ is very
small, we can always have $2\left|b_{j}k_{1}/w_{j}-\theta(k_{1})\right|>|u|$.
Therefore, we can only consider $n\neq0$.

By denoting $\beta_{n}=2n\pi+2\theta(k_{1})$, we have 
\[
w_{j}=\frac{2b_{j}k_{1}}{\beta_{n}+u}.
\]
Therefore, the volume of $w_{j}$ such that $\left|1-C_{1}(k_{1})\right|\in[0,\xi]$
for each $n\neq0$ is 
\[
V_{1,n}=\frac{2b_{j}k_{1}}{\beta_{n}-u}-\frac{2b_{j}k_{1}}{\beta_{n}+u}=\frac{2b_{j}k_{1}}{\beta_{n}^{2}-u^{2}}2u\leq\frac{2b_{j}k_{1}}{\beta_{n}^{2}}2u.
\]
Denote $C_{3}=4b_{j}k_{1}$, we have the Lebesgue measure of $w_{j}$
such that $\left|1-C_{1}\right|\in[0,\xi]$ as 
\[
V_{1}=\sum_{n=-\infty}^{\infty}V_{1,n}\leq\sum_{n=-\infty}^{\infty}\frac{C_{3}}{\beta_{n}^{2}}u.
\]
Since $\sum_{n=-\infty}^{\infty}\frac{1}{\beta_{n}^{2}}$ is bounded,
we have 
\[
V_{1}\leq B_{0}u,
\]
where $B_{0}$ is a constant independent of $w_{j}$.

Next, we consider the Lebesgue measure of $w_{j}$ when $\left|1-C_{1}(k_{1})\right|>\xi$
and the following holds
\begin{equation}
\left|\frac{\partial L(k_{1})}{\partial a_{j}}\right|>\left|\frac{\partial L(k_{2})}{\partial a_{j}}\right|.\label{eq:ImportantIneq}
\end{equation}
 Consider 
\[
A(k_{1})\exp\left(-|\pi k_{1}/2w_{j}|\right)\left|1-C_{1}(k_{1})\right|>A(k_{2})\exp\left(-|\pi k_{2}/2w_{j}|\right)\left|1-C_{1}(k_{2})\right|,
\]
we have 
\begin{equation}
\exp\left(\pi(k_{2}-k_{1})/|2w_{j}|\right)>\frac{A(k_{2})}{A(k_{1})}\frac{\left|1-C_{1}(k_{2})\right|}{\left|1-C_{1}(k_{1})\right|}.\label{eq:expineq}
\end{equation}
Denote $\Delta k=k_{2}-k_{1}$. Since $A(k_{1})>c_{1}>0$, $A(k_{2})<c_{2}<\infty$,
we can denote $C_{4}=\max(2A(k_{2})/A(k_{1}))$. To find $w_{j}$
such that Eq. (\ref{eq:ImportantIneq}) holds, it is sufficient to
consider
\[
\exp\left(\pi\Delta k/|2w_{j}|\right)>\max\left(\frac{A(k_{2})}{A(k_{1})}\right)\frac{\left|1-C_{1}(k_{2})\right|}{\left|1-C_{1}(k_{1})\right|},
\]
it is also sufficient to consider 
\[
\exp\left(\pi\Delta k/|2w_{j}|\right)>\max\left(\frac{A(k_{2})}{A(k_{1})}\right)\frac{2}{\xi}=\frac{C_{4}}{\xi}.
\]
Then, 
\[
\pi\Delta k/|2w_{j}|>\ln\frac{C_{4}}{\xi},
\]
\begin{equation}
|w_{j}|<\frac{\pi\Delta k}{2\left(\ln C_{4}-\ln\xi\right)}.\label{eq:asigma}
\end{equation}
Denote 
\[
G\triangleq\frac{\pi\Delta k}{2\left(\ln C_{4}-\ln\xi\right)}.
\]
Consider the interval $(0,G]$. The total Lebesgue measure is $G$.
Then, the Lebesgue measure of $w_{j}$ when $\left|1-C_{1}(k_{1})\right|>\xi$
and Eq. (\ref{eq:ImportantIneq}) holds is $V_{2}>G-V_{1}$. The ratio
\[
\lim_{\xi\rightarrow0}\frac{V_{2}}{G}\geq\lim_{\xi\rightarrow0}\left(1-\frac{V_{1}}{G}\right)=1-\lim_{\xi\rightarrow0}B_{0}\frac{\ln C_{4}-\ln\xi}{\pi\Delta k}(\xi+o(\xi))=1.
\]
Therefore, when $\xi\rightarrow0$, for all most all $|w_{j}|\in(0,G]$,
Eq. (\ref{eq:ImportantIneq}) holds. The proofs for $\Theta_{jl}=w_{j},b_{j}$
are similar. Then, we have proved the theorem.
\end{proof}

\section{Proof of Theorem \ref{thm:equalprioritySmall}}
\begin{thm*}
\label{thm:equalprioritySmall-1}Consider a DNN with one hidden layer
with tanh function $\sigma(x)$ as the activation function. For any
frequencies $k_{1}$ and $k_{2}$ such that $k_{2}>k_{1}>0$, we consider
non-degenerate situation, that is, $\left|F[f](k_{1})\right|>0$,
and there exist positive constants $C_{a}$, $\xi$, $\xi_{1}$, and
$\xi_{2}$, such that $A(k_{2})<C_{a}$, $\left|1-C_{1}(k_{1})\right|>\xi$,
and $\xi_{2}>|C_{2}(k_{1})|>\xi_{1}$. $\forall\epsilon>0$. Then,
for any $\epsilon>0$, there exists $M>0$, for any $w_{j}\in[-M,M]\backslash\{0\}$
such that there exists $\Theta_{jl}\in\{w_{j},b_{j},a_{j}\}$ satisfying
\begin{equation}
\left|\frac{\partial L(k_{1})}{\partial\Theta_{jl}}\right|=\left|\frac{\partial L(k_{2})}{\partial\Theta_{jl}}\right|,\label{eq:neq0}
\end{equation}
we have $\Delta_{F}(k_{1})<\epsilon$.
\end{thm*}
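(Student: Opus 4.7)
The plan is to start from the explicit gradient formula in Eq.~(\ref{eq:DL2}), namely
$\left|\partial L(k)/\partial\Theta_{jl}\right|=A(k)\exp(-|\pi k/2w_j|)\,G_{jl}(\Theta_j,k)$,
and turn the hypothesised equality at $k_1,k_2$ into an explicit formula for $A(k_1)$. Rearranging gives
$$A(k_1)=\frac{G_{jl}(\Theta_j,k_2)}{G_{jl}(\Theta_j,k_1)}\,A(k_2)\,\exp\!\Bigl(-\frac{\pi(k_2-k_1)}{2|w_j|}\Bigr).$$
The factor $A(k_2)$ is already controlled by the hypothesis $A(k_2)<C_a$, and the exponential factor tends to $0$ as $|w_j|\to0$ at a rate depending only on the fixed gap $k_2-k_1>0$. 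The whole proof then reduces to showing that the ratio $G_{jl}(\Theta_j,k_2)/G_{jl}(\Theta_j,k_1)$ stays bounded by a constant independent of $w_j$ (for each of the three choices $\Theta_{jl}\in\{a_j,b_j,w_j\}$), after which we simply choose $M$ so small that the product is smaller than $\epsilon\,|F[f](k_1)|$ and conclude via $\Delta_F(k_1)=A(k_1)/|F[f](k_1)|<\epsilon$, which is well-defined by the non-degeneracy assumption $|F[f](k_1)|>0$.

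To bound the ratio I would treat the three cases separately using Eqs.~(\ref{eq:DLalpha})--(\ref{eq:paritialLB}). For $\Theta_{jl}=a_j$, the ratio is $|1-C_1(k_2)|/|1-C_1(k_1)|$; the denominator is bounded below by $\xi$ by hypothesis, and the numerator is at most $2$ since $|C_1(k_2)|=1$ (it is an exponential of a pure imaginary). For $\Theta_{jl}=b_j$, the same analysis gives an extra $k_2/k_1$ factor which is a fixed constant. For $\Theta_{jl}=w_j$, the ratio is $|C_2(k_2)|/|C_2(k_1)|$; the denominator is bounded below by $\xi_1$ by hypothesis, and the numerator can be bounded using the explicit form in Eq.~(\ref{eq:C2}), giving $|C_2(k_2)|\le 2|\pi k_2-2w_j|+4|b_j|k_2$, which is uniformly bounded on the bounded parameter set we consider (in particular once $|w_j|\le M$). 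So in every case the ratio admits a constant upper bound $K=K(k_1,k_2,b_j,\xi,\xi_1)$ independent of $w_j$.

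Given this, choosing $M>0$ small enough that
$$\frac{K\,C_a}{|F[f](k_1)|}\exp\!\Bigl(-\frac{\pi(k_2-k_1)}{2M}\Bigr)<\epsilon$$
immediately yields $\Delta_F(k_1)<\epsilon$ for every nonzero $|w_j|\le M$ at which the equality in Eq.~(\ref{eq:neq0}) can be realised. The main obstacle I anticipate is the $w_j$-derivative case: here $C_2$ is a somewhat intricate combination of $C_1$, $w_j$, and $b_j$, and the hypothesis supplies bounds only on $|C_2(k_1)|$. One must argue carefully that the numerator $|C_2(k_2)|$ cannot blow up while the denominator $|C_2(k_1)|$ stays above $\xi_1$; this is a structural observation about $C_2$ as a function of $k$ with $b_j,w_j$ fixed, and the cleanest way is to use the manifest upper bound $|C_2(k)|\le 2|\pi k-2w_j|+4|b_j|k$ noted above. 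Everything else is routine algebra exploiting the hypotheses.
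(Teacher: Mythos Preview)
Your proposal is correct and follows essentially the same route as the paper: both rearrange the equality $|\partial L(k_1)/\partial\Theta_{jl}|=|\partial L(k_2)/\partial\Theta_{jl}|$ into an expression for $A(k_1)$ dominated by $A(k_2)\exp\bigl(-\pi(k_2-k_1)/2|w_j|\bigr)$ times a bounded ratio, then choose $M$ small enough that $\Delta_F(k_1)=A(k_1)/|F[f](k_1)|<\epsilon$. If anything you are more explicit than the paper, which carries out only the $\Theta_{jl}=a_j$ case in detail (bounding the ratio by $2/\xi$ exactly as you do) and then dispatches the $b_j$ and $w_j$ cases with a one-word ``Similarly''; your discussion of why $|C_2(k_2)|$ stays bounded while $|C_2(k_1)|\ge\xi_1$ fills in precisely the step the paper omits.
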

\begin{proof}
First, we prove when $\Theta_{jl}=a_{j}$. From the condition in Eq.
(\ref{eq:neq0}), we have

\[
A(k_{1})\exp\left(-|\pi k_{1}/w_{j}|\right)\left|1-C_{1}(k_{1})\right|\frac{2\pi}{w_{j}}=A(k_{2})\exp\left(-|\pi k_{2}/w_{j}|\right)\left|1-C_{1}(k_{2})\right|\frac{2\pi}{w_{j}}
\]

\[
A(k_{1})=A(k_{2})\exp\left(-\pi(k_{2}-k_{1})/|w_{j}|\right)\frac{\left|1-C_{1}(k_{2})\right|}{\left|1-C_{1}(k_{1})\right|}
\]
\begin{align*}
\Delta_{F}(k_{1}) & =\frac{A(k_{2})}{\left|F[f](k_{1})\right|}\exp\left(-\pi(k_{2}-k_{1})/|w_{j}|\right)\frac{\left|1-C_{1}(k_{2})\right|}{\left|1-C_{1}(k_{1})\right|}\\
 & <\frac{A(k_{2})}{\left|F[f](k_{1})\right|}\exp\left(-\pi(k_{2}-k_{1})/|w_{j}|\right)\frac{2}{\xi}.
\end{align*}
Note that $C_{a}>A(k_{2})$. To make $\Delta_{F}(k_{1})<\epsilon$,
it is sufficient to consider 
\[
\frac{C_{a}}{\left|F[f](k_{1})\right|}\exp\left(-\pi(k_{2}-k_{1})/|w_{j}|\right)\frac{2}{\xi}<\epsilon,
\]
 we have 
\[
w_{j}<-\frac{\pi(k_{2}-k_{1})}{\ln\left(\epsilon C_{5}\right)},
\]
where 
\[
C_{5}=\frac{\xi\left|F[f](k_{1})\right|}{2C_{a}}.
\]
 Then, let 
\[
M_{1}=-\frac{\pi(k_{2}-k_{1})}{\ln\left(\epsilon C_{5}\right)},
\]
 For any $w_{j}$ such that $0<|w_{j}|<M_{1}$, $\Delta_{F}(k_{1})<\epsilon$.
Similarly, for $\Theta_{jl}=w_{j},b_{j}$, we can have $M_{2}$ and
$M_{3}$. Then, let 
\[
M=\min(M_{1},M_{2},M_{3}),
\]
we have that for any $w_{j}$such that $0<|w_{j}|<M$, $\Delta_{F}(k_{1})<\epsilon$.
\end{proof}

\section{Proof of Theorem \ref{thm:equalpriorityHigh}}
\begin{thm*}
\label{thm:equalpriorityHigh-1}Consider a DNN with one hidden layer
with tanh function $\sigma(x)$ as the activation function. For any
frequencies $k_{1}$ and $k_{2}$ such that $k_{2}>k_{1}>0$, we consider
non-degenerate situation, that is, $\left|F[f](k_{1})\right|>0$,
and there exist positive constants $\xi$, $\xi_{1}$, and $\xi_{2}$,
such that $\left|1-C_{1}(k_{1})\right|>\xi$ and $\xi_{2}>|C_{2}(k_{1})|>\xi_{1}$,
for any $B>0$, there exists $M>0$, for any $A(k_{2})>M$, such that
there exists $\Theta_{jl}\in\{w_{j},b_{j},a_{j}\}$ satisfying
\begin{equation}
\left|\frac{\partial L(k_{1})}{\partial\Theta_{jl}}\right|=\left|\frac{\partial L(k_{2})}{\partial\Theta_{jl}}\right|\neq0,\label{eq:neq02}
\end{equation}
we have $\Delta_{F}(k_{1})>B$.
\end{thm*}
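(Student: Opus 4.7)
The plan is to mirror the strategy used in the proof of Theorem~\ref{thm:equalprioritySmall}, but to drive the estimate in the opposite direction: rather than upper-bounding $\Delta_F(k_1)$ from the equality of gradient magnitudes, I would extract a direct proportionality between $A(k_1)$ and $A(k_2)$ and then force the resulting lower bound on $\Delta_F(k_1)$ to exceed $B$ by enlarging $A(k_2)$. Concretely I would proceed case by case over $\Theta_{jl}\in\{a_j,w_j,b_j\}$; the ``$\neq 0$'' part of the assumed equality guarantees, in each case, that the ratio appearing in the resulting identity is strictly positive.

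\textbf{Case $\Theta_{jl}=a_j$.} Taking magnitudes in Eq.~(\ref{eq:DLalpha}) and using $|C_0|=\sqrt{\pi/2}$, the equality rewrites as
\[
A(k_1)\bigl|1-C_1(k_1)\bigr|\exp\!\bigl(-\pi k_1/2|w_j|\bigr)=A(k_2)\bigl|1-C_1(k_2)\bigr|\exp\!\bigl(-\pi k_2/2|w_j|\bigr),
\]
and the nonvanishing clause forces $|1-C_1(k_2)|>0$. Solving gives $A(k_1)=\kappa_a A(k_2)$ with
\[
\kappa_a\triangleq\frac{\bigl|1-C_1(k_2)\bigr|}{\bigl|1-C_1(k_1)\bigr|}\exp\!\Bigl(-\tfrac{\pi(k_2-k_1)}{2|w_j|}\Bigr)>0,
\]
so $\Delta_F(k_1)=\kappa_a A(k_2)/|F[f](k_1)|$, and $\Delta_F(k_1)>B$ is equivalent to $A(k_2)>M_1\triangleq B|F[f](k_1)|/\kappa_a$.

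\textbf{Cases $\Theta_{jl}=w_j$ and $\Theta_{jl}=b_j$.} The same manipulation applied to Eqs.~(\ref{eq:paritialLA}) and (\ref{eq:paritialLB}) yields analogous identities $A(k_1)=\kappa_w A(k_2)$ and $A(k_1)=\kappa_b A(k_2)$ with
\[
\kappa_w\triangleq\frac{|C_2(k_2)|}{|C_2(k_1)|}\exp\!\Bigl(-\tfrac{\pi(k_2-k_1)}{2|w_j|}\Bigr),\qquad \kappa_b\triangleq\frac{k_2|1-C_1(k_2)|}{k_1|1-C_1(k_1)|}\exp\!\Bigl(-\tfrac{\pi(k_2-k_1)}{2|w_j|}\Bigr).
\]
The hypotheses $|C_2(k_1)|>\xi_1$ and $|1-C_1(k_1)|>\xi$, together with the nonvanishing equality, again give $\kappa_w,\kappa_b>0$; I would then set $M_2\triangleq B|F[f](k_1)|/\kappa_w$, $M_3\triangleq B|F[f](k_1)|/\kappa_b$, and declare $M\triangleq\max(M_1,M_2,M_3)$. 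For any $A(k_2)>M$, whichever $\Theta_{jl}$ satisfies the assumed nonzero equality, the corresponding identity immediately yields $\Delta_F(k_1)>B$.

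The main obstacle, and the only delicate point, is that none of the stated hypotheses supplies a uniform positive lower bound on $|1-C_1(k_2)|$ or on $|C_2(k_2)|$; a priori each $\kappa$ can be arbitrarily small and thus each $M_i$ arbitrarily large. What rescues the argument is precisely the explicit ``$\neq 0$'' clause attached to the equality, which guarantees that each $\kappa$ is strictly positive for the parameter configuration at hand, so that the three thresholds are finite. Beyond extracting this positivity the argument is purely algebraic, a direct mirror of the proof of Theorem~\ref{thm:equalprioritySmall}, and no measure-theoretic ingredient of the kind required for Theorem~\ref{thm:Priority} is needed.
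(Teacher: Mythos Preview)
Your proposal is correct and follows essentially the same route as the paper: in each of the three cases you solve the gradient-magnitude equality for $A(k_1)$ in terms of $A(k_2)$, convert to $\Delta_F(k_1)$, read off a threshold $M_i$, and take $M=\max(M_1,M_2,M_3)$. The only cosmetic difference is that the paper replaces the ratio $|1-C_1(k_2)|/|1-C_1(k_1)|$ by the constant $\xi/2$ before solving for $M_i$, whereas you keep the exact proportionality constant $\kappa$ and invoke the ``$\neq 0$'' clause solely to guarantee its positivity; your handling of this point is in fact the more careful of the two.
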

\begin{proof}
First, we prove when $\Theta_{jl}=a_{j}$. Due to the condition Eq.
(\ref{eq:neq02}), we have

\[
A(k_{1})\exp\left(-|\pi k_{1}/w_{j}|\right)\left|1-C_{1}(k_{1})\right|\frac{2\pi}{w_{j}}=A(k_{2})\exp\left(-|\pi k_{2}/w_{j}|\right)\left|1-C_{1}(k_{2})\right|\frac{2\pi}{w_{j}}
\]

\[
A(k_{1})=A(k_{2})\exp\left(-\pi(k_{2}-k_{1})/|w_{j}|\right)\frac{\left|1-C_{1}(k_{2})\right|}{\left|1-C_{1}(k_{1})\right|}
\]
\begin{align*}
\Delta_{F}(k_{1}) & =\frac{A(k_{2})}{\left|F[f](k_{1})\right|}\exp\left(-\pi(k_{2}-k_{1})/|w_{j}|\right)\frac{\left|1-C_{1}(k_{2})\right|}{\left|1-C_{1}(k_{1})\right|}\\
 & >\frac{A(k_{2})}{\left|F[f](k_{1})\right|}\exp\left(-\pi(k_{2}-k_{1})/|w_{j}|\right)\frac{\xi}{2}.
\end{align*}
To have $\Delta_{F}(k_{1})>B$, it is sufficient to consider 
\[
\frac{A(k_{2})}{\left|F[f](k_{1})\right|}\exp\left(-\pi(k_{2}-k_{1})/|w_{j}|\right)\frac{\xi}{2}>B,
\]
we have 
\[
A(k_{2})>B\exp\left(\pi(k_{2}-k_{1})/|w_{j}|\right)\frac{\left|F[f](k_{1})\right|\xi}{2}.
\]
Then, let 
\[
M_{1}=B\exp\left(\pi(k_{2}-k_{1})/|w_{j}|\right)\frac{\left|F[f](k_{1})\right|\xi}{2},
\]
 For any $A(k_{2})>M_{1}$, $\Delta_{F}(k_{1})>B$. Similarly, for
$\Theta_{jl}=w_{j},b_{j}$, we can have $M_{2}$ and $M_{3}$. Then,
let 
\[
M=\max(M_{1},M_{2},M_{3}),
\]
we have that for any $A(k_{2})>M$, $\Delta_{F}(k_{1})>B$.
\end{proof}

\section{Fitting high-frequency dominant functions \label{subsec:Fitting-high-frequency-dominant}}

We construct a function $g(x)$ as follows. First, we  evenly sample
$40$ points from $[-1,1]$ for $f(x)=x$. Second, we perform DFT
for $f(x)$, that is, $F[f](k)$. Third, we flip $F[f](k)$ to derive
$g(x)$ as shown in Fig.\ref{fig:Flip}a and Fig.\ref{fig:Flip}b.
In Fig. \ref{fig:Flip}c, except for the highest peak (the $19$th
frequency), $\Delta_{F}(k)$ of other frequency components oscillate
and decrease their amplitudes. For a very low-frequency components,
such as the first frequency component in Fig. \ref{fig:Flip}c, its
$\Delta_{F}(k)$ oscillates more, and its amplitude is small compared
with other frequencies in Fig. \ref{fig:Flip}c. Note that in this
experiment, although low-frequency component could deviate from the
converged state, they still converge faster after deviation compared
with high-frequency ones.
\begin{center}
\begin{figure}
\begin{centering}
\includegraphics{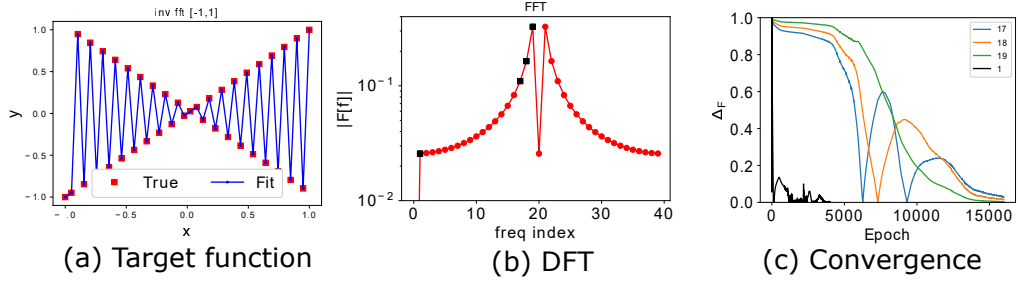}
\par\end{centering}
\caption{Frequency domain analysis for high-frequency dominant function $g(x)$,
whose DFT is obtained by flipping the DFT of $y=x$ with $40$ evenly
sampled points from $[-1,1]$. (a) red dots is for $g(x)$, blue squares
(connected by blue lines) are for the DNN output at the end of training.
(b) FFT of $g(x)$. (c) $\Delta_{F}$ at different recording steps
for different frequency peaks (different curves). We use a tanh DNN
with width: 200-200-200-200-100. The learning rate is $2\times10^{-5}$
with the full batch. DNN parameters are initialized by Gaussian distribution
with mean $0$ and standard deviation 0.1. \label{fig:Flip}}
 
\end{figure}
\par\end{center}

\newpage{}

\subsection*{\bibliographystyle{agsm}
\bibliography{C:/GDrive/Research/DeepLearning-xzx/BIB/DLRef}
}
\end{document}